\newtheorem{proposition}{Proposition}
\newtheorem{lemma}{Lemma}
\newtheorem{remark}{Remark}
\title{Towards Consistent Video Editing with \\ Text-to-Image Diffusion Models}
\author{%
  Zicheng Zhang\textsuperscript{1}\thanks{Equal contribution}
	\quad
	Bonan Li\textsuperscript{1}\footnotemark[1]
	\quad
	Xuecheng Nie\textsuperscript{2}
	\quad
	Congying Han\textsuperscript{1}
	\quad
	Tiande Guo\textsuperscript{1}
	\quad
	Luoqi Liu\textsuperscript{2}
	\vspace{.5em} 
	\\ 
	\textsuperscript{1}University of Chinese Academy of Sciences
	\quad
	\textsuperscript{2}MT Lab, Meitu Inc. 
    }
\begin{document}

\maketitle

    \vspace{-6mm}
\begin{figure}[h]
    \centering
    \includegraphics[width=1\linewidth]{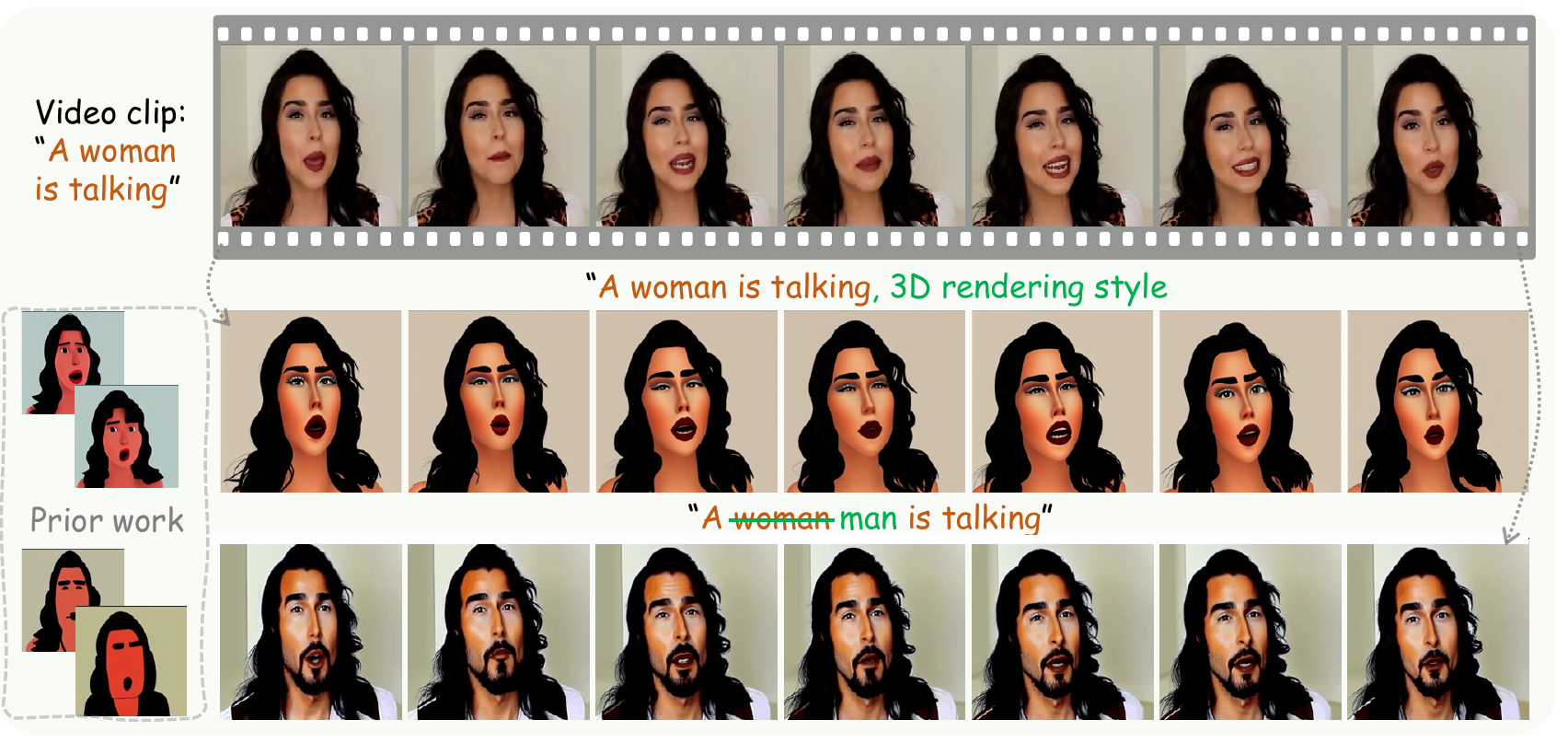}
        \vspace{-6mm}
    \caption{Comparisons of the proposed EI$^2$ and prior SOTA for text-driven video editing with Text-to-Image diffusion model. 
    EI$^2$ addresses problems on temporal and semantic inconsistencies existing in prior work, leading to more consistent editing results. Better visual effect in color mode.
    }
    \vspace{-2mm}
    \label{fig:teaser}
\end{figure}

\begin{abstract}
    Existing works have advanced Text-to-Image (TTI) diffusion models for video editing in a one-shot learning manner. Despite their low requirements of data and computation, these methods might produce results of unsatisfied consistency with text prompt as well as temporal sequence, limiting their applications in the real world. In this paper, we propose to address the above issues with a novel EI$^2$ model towards \textbf{E}nhancing v\textbf{I}deo \textbf{E}diting cons\textbf{I}stency of TTI-based frameworks. Specifically, we analyze and find that the inconsistent problem is caused by newly added modules into TTI models for learning temporal information. These modules lead to covariate shift in the feature space, which harms the editing capability. Thus, we design EI$^2$ to tackle the above drawbacks with two classical modules: Shift-restricted Temporal Attention Module (STAM) and Fine-coarse Frame Attention Module (FFAM). First, through theoretical analysis, we demonstrate that covariate shift is highly related to Layer Normalization, thus STAM employs a \textit{Instance Centering} layer replacing it to preserve the distribution of temporal features.  In addition, {STAM} employs an attention layer with normalized mapping to transform temporal features while constraining the variance shift.  As the second part, we incorporate {STAM} with a novel {FFAM}, which efficiently leverages fine-coarse spatial information of overall frames to further enhance temporal consistency. Extensive experiments demonstrate the superiority of the proposed EI$^2$ model for text-driven video editing. 
\end{abstract}

\section{Introduction}
In light of the surging popularity of video applications in modern times,  there has been a growing focus on developing editing techniques~\cite{ho2022video,ho2022imagen,villegas2022phenaki,singer2022make} in research community to provide automatic and efficient video editing services for 
counterparts, \emph{e.g.}, content creators and media professionals.
While existing techniques have explored the use of Variational Autoencoders (VAEs)~\cite{Kingma2013AutoEncodingVB,vqvae} and Generative Adversarial Networks (GANs)~\cite{gur2020hierarchical,Hu2021MakeIM,yang2022Vtoonify,Tzaban2022StitchII}, they are often limited to specific scenes or datasets, and struggle to provide a comprehensive solution.
Thanks to the remarkable capabilities of diffusion models~\cite{ddpm,ho2020denoising,song2021scorebased} in distribution learning, recent research~\cite{rombach2022high,dalle2,imagen} has facilitated substantial progress in various fields including image synthesis and editing, thus posing significant promise for solving tasks in video domain. 
However, replicating the success in video domain is challenging, due to the higher-dimensional complexity, temporal consistency, and lack of high-quality datasets. 
In this paper, we develop diffusion model to tackle \textit{text-driven single video editing} task~\cite{BarTal2022Text2LIVETL,wu2022tune,esser2023structure}, which utilizes text prompts to guide the change of style and objects in a given video.

From the perspective of diffusion models, prior works typically build Text-to-Video (TTV) frameworks and leverage the generative prior to address this task, wherein two primary approaches are exploited:  The first approach~\cite{molad2023dreamix,esser2023structure,singer2022make} directly trains a TTV model on a large corpus of text and video pairs to facilitate video editing. However, due to the lack of high-quality public video datasets, current methods mainly rely on in-house datasets and demand significant computational resources.
 The second approach~\cite{wu2022tune,liu2023video} extends the architecture of pre-trained Text-to-Image (TTI) models~\cite{rombach2022high,dalle2} for video editing.
 To capture motion in videos, various temporal modules, \emph{e.g.}, \textit{Spatial-Casual Attention} (\textit{SCA}) and \textit{Temporal Attention} (\textit{TA}), are introduced to TTI models. The temporal motion will be learned from video via tuning and DDIM inversion. This approach is more cost-efficient, easily accessible, and thus gaining more attention in recent studies~\cite{liu2023video,shin2023edit,qi2023fatezero,ceylan2023pix2video,Wang2023ZeroShotVE}. 
 
Nonetheless, methods inflating  TTI models~\cite{wu2022tune,liu2023video} suffer from two critical issues: \textit{Temporal inconsistency}, where the edited video exhibits cross-frame flickering in vision, and \textit{Semantic disparity}, where videos are not altered in accordance with the semantics of given text prompts. Addressing these issues will considerably push the frontiers of text-driven video editing. In this paper, we reveal that the cause of the semantic disparity is not fine-tuning or over-fitting to the given video, but rather the newly introduced temporal layer, designed to ensure consistency across frames, which may weaken or even eliminate the editing power of TTI models. \textit{We attribute this phenomenon to the {covariate shift} of generative feature space}, as the extra added parameters of Temporal Attention (\textit{TA}) without any constraint inevitably transform the statistics of features~\cite{bn,InstanceNT,layernorm}. Moreover, while using \textit{SCA} can help reduce computational overhead, it is a suboptimal choice given that the non-autoregressive nature of the diffusion model demands considering global relationships. Therefore, incorporating efficient global spatial-temporal attention~\cite{transformer,song2017end} is essential for enhancing temporal consistency.

We present a novel approach called EI$^2$ that enhances the capabilities of existing pre-trained TTI diffusion models~\cite{rombach2022high} by incorporating well-grounded temporal modules for video editing task. EI$^2$ follows the one-shot tuning paradigm~\cite{wu2022tune},  while making  significant contributions in both theory and practice:
(\textbf{1}) \textbf{Shift-restricted Temporal Attention Module} (STAM) to resolve the semantic disparity: 
\textit{(\textbf{i})} We provide theoretical proof, under certain assumptions, that the covariate shift is unrelated to tuning, but is caused by newly introduced parameters in the \textit{TA} module. This provides valuable guidance in addressing the problem. 
\textit{(\textbf{ii})} On this basis, we identify that  the \textit{Layer Norm}~\cite{layernorm} in \textit{TA} module is the prime cause of covariate shift. To mitigate this issue, we propose a simple yet effective replacement called \textit{Instance Centering} to restrict distribution shift.
\textit{(\textbf{iii})} Furthermore, we constrain the shift of variance by normalizing the weights in the vanilla attention within the \textit{TA} module. (\textbf{2})~\textbf{Fine-coarse Frame Attention Module} (FFAM) to enhance temporal consistency: We improve \textit{SCA} with a novel fine-coarse interactive mechanism to establish an efficient spatial-temporal attention, which considers all frames while achieving low computational complexity. Instead of discarding information on the temporal dimension, we perform sampling along the spatial dimension. This preserves the overall structure of spatial-temporal data and reduces the data volumes for consideration. Concretely, FFAM respects the vanilla attention design to preserve fine information in the current frame, but downsamples non-current frames to obtain coarse features for interaction. In this way, FFAM can achieve improved modeling of motion while keeping comparable computational burden as \textit{SCA}. (\textbf{3}) Extensive experiments validate that 
 $\text{EI}^2$ effectively addresses semantic disparity and enhances temporal consistency in video editing compared to the current state-of-the-art methods.

\section{Related works}

\vspace{-2mm}
\paragraph{Text-to-Image diffusion models.}
Diffusion models have demonstrated remarkable success in image generation by effectively approximating data distributions~\cite{ddpm, song2020denoising, song2021scorebased, song2020improved}, outperforming previous generative models such as GANs ~\cite{Karras2019stylegan2, dhariwal2021diffusion, saharia2022palette, kingma2021variational}. Building upon the advancements of pre-trained language models~\cite{clip,2020t5}, Text-to-Image (TTI) diffusion models~\cite{dalle2, imagen, rombach2022high} have been developed to generate high-fidelity images conditioned on text descriptions. Prominent examples include recent works like GLIDE~\cite{nichol2021glide}, DALLE2~\cite{dalle2}, and Imagen~\cite{imagen} achieve impressive and controllable image generation. Recently, the Latent Diffusion Model (LDM)~\cite{rombach2022high} has drawn more attention due to its efficiency and compelling results, which is used as our base model in the experiments.

\vspace{-3mm}
\paragraph{Text-to-Video diffusion models.}  TTV generation poses greater challenges compared to image generation due to its higher-dimensional complexity and the scarcity of high-quality datasets. Previous approaches mainly rely on autoregressive models~\cite{villegas2023phenaki,hong2022cogvideo}. Pioneering works~\cite{ho2022video,ho2022imagen} 
introduce new architectures using 3D U-Net~\cite{3d-unet} with factorized spatial-temporal attention~\cite{arnab2021vivit, ho2019axial, bertasius2021space}. 
Recent works attempt to leverage image diffusion priors in both modeling and data, such as Make-A-Video~\cite{singer2022make}, which proposes fine-tuning from a pre-trained DALLE2~\cite{dalle2} with spatial-temporal modules.
However, the challenges of collecting large-scale text-video datasets and the substantial computational overheads still hinder many researchers in this field. Recently, two concurrent works \cite{Khachatryan2023Text2VideoZeroTD,An2023LatentShiftLD} propose adapting LDM to the video without fine-tuning, but still persist inconsistent issues.

\vspace{-3mm}
\paragraph{Diffusion for text-driven editing.} 
Due to clear mapping relationships between data and diffusion latent space, given an image, it is efficient to obtain DDPM \cite{ddpm} or DDIM \cite{song2020denoising} latent and then perform text-driven editing. To strengthen structural control,
P2P~\cite{hertz2022prompt} and Plug-and-Play~\cite{tumanyan2022plug} use attention control to minimize changes to unrelated parts. Null-Text Inversion~\cite{mokady2022null} fine-tunes the embedding of null text to improve real image editing. 
Some works \cite{gal2022image,ruiz2022dreambooth,kumari2022multi} fine-tune TTI models to learn special tokens for personalized concepts and generate related images. While these methods can be applied to video editing frame by frame, it is hard for them to ensure temporal consistency. 
Dreamix~\cite{molad2023dreamix} and Gen-1~\cite{esser2023structure} train TTV models to achieve impressive editing performance but are not open-source.
Recently, \textit{Tune-A-Video} (TAV)~\cite{wu2022tune} proposes a one-shot fine-tuning strategy on LDM~\cite{rombach2022high} for video editting.
Based on it, several concurrent works~\cite{liu2023video,shin2023edit,qi2023fatezero,ceylan2023pix2video,Wang2023ZeroShotVE} are developed to improve the editing quality, \textit{e.g.}, speed and background preservation. 
By contrast, we are concerned about the essential problem, \textit{i.e.}, the negative impact of introduced parameters on text-driven editing ability.

\section{Preliminaries}
\vspace{-3mm}
\paragraph{Diffusion probability model.}
DDPM~\cite{ddpm} establishes a relationship between complex data distribution and the Gaussian distribution using forward and reverse Markov chains. Following the convention, we denote the random variable of data as $x_0$, and the forward process generates latent variables $x_1,\dots,x_T$ through $q(x_t|x_{t-1})=N(x_t;\sqrt{\alpha_t}x_{t-1}, (1-\alpha_t)I)$, 
where $\{\alpha_t\}_{t=1}^T$ is a fixed variance schedule. The reverse process starts from $x_T$ to sample the real data $x_0$ sequentially through
\begin{equation}\label{eq: reverse}
    \begin{aligned}
        p_{\theta}(x_{t-1}|x_t, \mathcal{P})=N(y_{t-1};\mu_{\theta}(x_t,t,\mathcal{P}), \Sigma_{\theta}(x_t,t,\mathcal{P})).
    \end{aligned}
\end{equation}
Herein, $\mathcal{P}$ is an optional condition, $\Sigma_{\theta}$ can be either predefined constants or trainable parameters associated with the variance schedule, and $\mu_\theta(x_t,t,\mathcal{P})=\frac{1}{\sqrt{\bar{\alpha}_t}}(x_t-\frac{1-\alpha_t}{\sqrt{1-\bar{\alpha}_t}}\epsilon_\theta(x_t,t,\mathcal{P}))$, where $\bar{\alpha}_t=\prod_{i=1}^{t}\alpha_i$, and $\varepsilon_\theta$ is a parameterized function that is trained under diffusion loss supervision
\begin{equation}\label{diffusion loss}
Loss({\varepsilon_{\theta}}) = E_{x_0\sim p_{data}, \varepsilon \sim N(0, I), t \sim \text{Uniform}(1, T)}\left\|\varepsilon-\varepsilon_\theta\left(x_t, t, \mathcal{P}\right)\right\|_2^2.
\end{equation}
This makes the reverse process~Eq. \eqref{eq: reverse} approach to the Evidence Lower Bound of data distribution. 

\vspace{-3mm}
 \paragraph{Latent Diffusion Model (LDM).}  
 
LDM~\cite{rombach2022high} first trains an Autoencoder network consisting of an encoder $\mathcal{E}$ and a decoder $\mathcal{D}$, where the composition $\mathcal{D}\circ\mathcal{E}$ approximates the identity mapping. This enables $\mathcal{E}$ to compress image data into a low-dimensional latent space. LDM then learns a diffusion model by optimizing Objective \eqref{diffusion loss} in the latent space, with the corresponding text prompt $\mathcal{P}$.
In LDM, $\epsilon_{\theta}$ is equipped with a convolutional Transformer-based U-Net architecture, where the attention mechanism plays a critical role in interacting information from different modalities. Specifically, given two features $z\in \mathbb{R}^{l_1 \times d_1}$ and $c\in \mathbb{R}^{l_2 \times d_2}$, the attention mechanism~\cite{transformer} obtains three elements Query $Q$, Key $K$ and Value $V$ by $Q=zW_{q},\ K=cW_{k}, \ V=cW_{v}$, where $W_{q}\in \mathbb{R}^{d_1 \times d}$, $W_{k}$ and $W_{v} \in \mathbb{R}^{d_2 \times d}$.
 After that, they will interact to generate the transferred feature via
 \begin{equation}\label{eq:attn}
    \mathrm{Attention}(z,c) =M_{Q,K}V,\ \text{where}\  M_{Q,K}= \mathrm{softmax}({QK^T}/{\sqrt{d}}).
\end{equation}
Let define $norm$ as the \textit{Layer Norm} operator~\cite{layernorm} and $Linear(z)=zW_{L}+b$, where $W_{L}\in \mathbb{R}^{d\times d_1}$ and $b\in\mathbb{R}^{d_1}$, a Transformer block in $\varepsilon_{\theta}$ has the residual structure shown below
 \begin{equation}\label{eq:transfomer}
    \mathrm{Transformer}(z,c) = z+Linear(\mathrm{Attention}(norm(z), norm(c))).
\end{equation}
For convenience in later discussions, we include the normalization operation of $c$ in the equation, although it may not be implemented in practice.   LDM incorporates two types of Transformer blocks. The first Self Attention module only receives  $z$ to compute $\mathrm{Transformer}(z,z)$, and the other Cross Attention module takes the latent feature $z$ and the text embedding $c$ to compute Eq. \eqref{eq:transfomer}.

\vspace{-3mm}
 \paragraph{Notations.} For clarity, we abbreviate Self Attention, Cross Attention, and Temporal Attention as $\textit{SA}$, $\textit{CA}$, and $\textit{TA}$, respectively. Accordingly, the Transformer blocks in Eq. \eqref{eq:transfomer} associated with these attention mechanisms are named \textit{SA}  Module, \textit{CA} Module, and \textit{TA} Module. In this paper, we use variables $n$, $l$ and $d$ to represent the length of temporal, spatial, and feature dimensions, respectively.

    \vspace{-2mm}
\begin{figure}[t]
    \centering
    \includegraphics[width=1\linewidth]{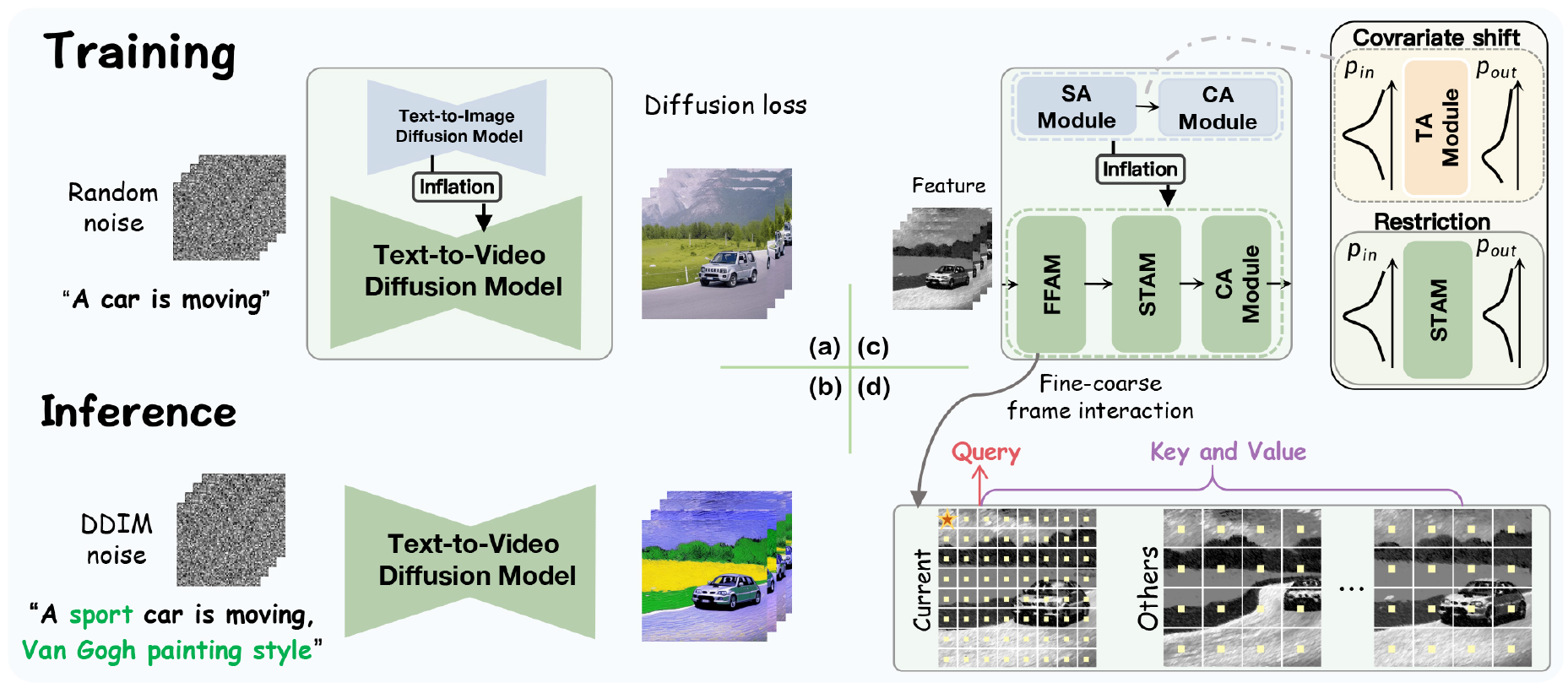}
    \vspace{-5mm}
    \caption{Illustration of $\text{EI}^2$ for text-driven video editing. (a) One-shot tuning paradigm: Given a video and a text prompt, $\text{EI}^2$ first inflates a pre-trained TTI model into a TTV model, which is further tuned by minimizing the diffusion loss. (b) Video editing: $\text{EI}^2$ uses noise from DDIM inversion \cite{ho2020denoising} and a custom text prompt to generate the edited video. (c) Model details: $\text{EI}^2$ enhances the TTI model by upgrading its Self Attention (\textit{SA}) module to a Fine-coarse Frame Attention Module (FFAM) and introducing a novel Shift-restricted Temporal Attention Module (STAM). Unlike Temporal Attention (TA) Module which shifts the input distribution intensively, STAM constrains the output to better align with the subsequent modules. (d) Interactive mechanism in FFAM: For each patch (red star) in the latent, it interacts with the fine patches in the current frame and coarse patches in other frames.}
    \label{fig:overview}
    \vspace{-3mm}
\end{figure}

\section{Methodology}
\subsection{Overview of $\text{EI}^2$}

Given a video clip $\mathcal{V}=\{v^{1},\dots,v^{n}\}$ consisting of $n$ frames, and an associated text prompt $\mathcal{P}$, text-driven video editing task leverages other text prompts to guide style or object changes in the video. While existing large-scale TTI diffusion models~\cite{rombach2022high,dalle2,ho2022imagen} have shown exceptional behavior in image editing, they suffer from temporal inconsistency when applied to the video frames.

\vspace{-3mm}
\paragraph{One-shot tuning for video editing.} In this paper, we propose a novel approach, named EI$^2$, which inflates the frequent pre-trained TTI diffusion model LDM~\cite{rombach2022high} with well-designed temporal modules to create a TTV diffusion model. 
As depicted in Figure \ref{fig:overview}, the inflated model is optimized to minimize diffusion loss about data pair $(\mathcal{V}, \mathcal{P})$ , so as to capture the temporal motion in video $\mathcal{V}$,  while retaining the editing capability of TTI model.  To edit the video, we use an altered prompt to guide the diffusion reverse process based on Eq. \eqref{eq: reverse}, where the initial latent $x_T$ is obtained by the DDIM inversion~\cite{ho2020denoising}.

\vspace{-3mm}
\paragraph{Model Inflation.} Technically, during the inflation process, the initial step involves converting all spatial convolutions in LDM into pseudo 3D convolutions, which allows video data and latent to be processed in the same way as a batch of images.
Furthermore, our approach differs from previous work~\cite{wu2022tune,VDM,qi2023fatezero,liu2023video} in two key aspects within Transformer blocks:
\textit{(\textbf{a})} We introduce a novel Shift-restricted Temporal Attention Module (STAM) that learns motion while solving the covariate shift problem. This addresses the essential limitation in previous works~\cite{wu2022tune,liu2023video}, where the text-driven capability of the TTI model is weakened or deprived after inflation with a \textit{TA} Module. \textit{(\textbf{b})} Recognizing the complexities over time, we upgrade the existing \textit{SA} Module into a new Fine-coarse Frame Attention Module (FFAM). Unlike \textit{SCA} in previous approaches~\cite{wu2022tune,liu2023video} reduces the complexity of attention through causal sampling along the time dimension, our FFAM compresses information along spatial dimension while retaining temporal information to facilitate efficient and effective spatial-temporal attention interaction, leading to better results.

\subsection{Theoretical analysis of covariate shift problem}\label{sec: theoretic}
As depicted in Figure \ref{fig:teaser}, it is apparent that previous works~\cite{wu2022tune,liu2023video} can lead to semantic disparity,~\textit{i.e.}, inconsistencies between the text prompt and the edited video. We attribute this disparity to the covariate shift~\cite{bn} in the feature space, where modifying the prior distribution of a pre-trained generative model typically leads to a degradation in synthesis. In the case of \textit{Tune-A-Video}~\cite{wu2022tune}, the one-shot data poses difficulties in training meaningful parameters for aligning the output distribution from \textit{TA} Module with the demands of the subsequent module. Therefore, our analysis investigates the transition of feature distribution in TTV model and provides evidence of covariate shift. 
 
Since \textit{Tune-A-Video} solely tunes the parameters of Transformers including $W_q$ in the vanilla modules, and all weights in \textit{TA} Module, we abstract the problem as changes in distribution after passing through these weights. By considering the structures in Eq. \eqref{eq:attn} and Eq. \eqref{eq:transfomer}, which serve as the basis for the \textit{SA}, \textit{CA} and \textit{TA} Modules, we can obtain the following discoveries:
\begin{proposition}\label{assum1}
     We assume that for Transformers in Eq. \eqref{eq:transfomer}, any input feature $z\in \mathbb{R}^{l\times d}$ (or $\mathbb{R}^{n\times d}$) is a sample of i.i.d $d$-dimensional Gaussian random variables, denoted as $rv(z)\sim N(\mu_{rv(z)}, \Sigma_{rv(z)})$, and $\hat{z} = Norm(z)$, same notation for $c$.
     (\textbf{i}) For the output from Eq. \eqref{eq:attn}, the $i$-th row vector variable, $\mathrm{Attention}(\hat{z}, \hat{c})^{i}$,  follows the Gaussian distribution $N(\mu_{rv(V)}, \omega_{Q,K} \Sigma_{rv(V)})$, where $\mu_{rv(V)} = W_{v}^{T}\mu_{rv(\hat{c})}$,  $\Sigma_{rv(V)} = W_{v}^{T}\Sigma_{rv(\hat{c})}W_{v}$, and $\omega_{Q,K} = \|M^{i}_{Q,K}\|^2_{2}$.
     (\textbf{ii}) For the output from Eq. \eqref{eq:transfomer}, each row vector variable in $\mathrm{Transformer}(z, c)$ follows the Gaussian distribution $N(\mu_{rv(z)}', \Sigma_{rv(z)}')$, where $\mu_{rv(z)}' = \mu_{rv(z)}+W_{L}^{T}\mu_{rv(V)}+b$, and $\Sigma_{rv(z)}'=\Sigma_{rv(z)}+\omega_{Q,K} W_{L}^{T}\Sigma_{rv(V)}W_{L}$.
\end{proposition}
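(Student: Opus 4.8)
The plan is to track how a Gaussian distribution on the rows of the input propagates through the two operations that make up a Transformer block: the scaled dot-product attention of Eq.~\eqref{eq:attn}, and the residual-plus-linear wrapper of Eq.~\eqref{eq:transfomer}. The key structural observation is that, \emph{conditional on the attention matrix} $M_{Q,K}$, every operation in sight is affine in the rows of $V$ (hence in the rows of $\hat c$), so Gaussianity is preserved and we only need to chase the mean and covariance. I would treat $M_{Q,K}$ as fixed throughout — this is exactly the content of viewing the rows of $z$ (and $c$) as i.i.d.\ samples, since then $Q$, $K$ are determined and the softmax weights are just deterministic coefficients depending on the sample.

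For part (\textbf{i}): first note $V = \hat c W_v$, so the $j$-th row of $V$ is $W_v^\top \hat c^j$ where $\hat c^j \sim N(\mu_{rv(\hat c)}, \Sigma_{rv(\hat c)})$ i.i.d.; by the standard linear-image rule for Gaussians, $rv(V) \sim N(W_v^\top \mu_{rv(\hat c)}, W_v^\top \Sigma_{rv(\hat c)} W_v)$, which defines $\mu_{rv(V)}$ and $\Sigma_{rv(V)}$. Then $\mathrm{Attention}(\hat z,\hat c)^i = \sum_j M^i_{Q,K}[j]\, V^j = \sum_j M^i_{Q,K}[j]\, (W_v^\top \hat c^j)$ is a weighted sum of i.i.d.\ Gaussian vectors with weights summing to $1$ (softmax output), so its mean is $\mu_{rv(V)}$ and its covariance is $\bigl(\sum_j (M^i_{Q,K}[j])^2\bigr)\Sigma_{rv(V)} = \|M^i_{Q,K}\|_2^2\,\Sigma_{rv(V)} = \omega_{Q,K}\Sigma_{rv(V)}$. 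The only subtlety worth a sentence is that the weights summing to one kills the cross terms' net contribution to the mean but the variance picks up the sum-of-squares factor rather than $1$, precisely because the summands are independent; this sum-of-squares factor $\omega_{Q,K}\le 1$ is the seed of the variance analysis later in the paper.

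For part (\textbf{ii}): apply $Linear(\cdot) = (\cdot)W_L + b$ row-wise to the output of (\textbf{i}); another affine-image step gives that $Linear(\mathrm{Attention}(\hat z,\hat c))^i \sim N(W_L^\top \mu_{rv(V)} + b,\ \omega_{Q,K} W_L^\top \Sigma_{rv(V)} W_L)$. Finally the residual connection adds the row variable $z^i \sim N(\mu_{rv(z)}, \Sigma_{rv(z)})$. Here I would invoke an independence assumption between the residual branch $z$ and the attention-branch output so that the two covariances add; the excerpt's i.i.d.\ setup and the fact that we are computing marginal row distributions make this the natural reading, and I would flag it explicitly. That yields $\mu'_{rv(z)} = \mu_{rv(z)} + W_L^\top \mu_{rv(V)} + b$ and $\Sigma'_{rv(z)} = \Sigma_{rv(z)} + \omega_{Q,K} W_L^\top \Sigma_{rv(V)} W_L$, as claimed.

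The main obstacle is not any computation — each step is a one-line Gaussian-image or sum-of-independent-Gaussians fact — but pinning down the probabilistic model precisely enough that these steps are legitimate: in particular, (a) that conditioning on $M_{Q,K}$ is harmless because the rows are i.i.d.\ (so $M_{Q,K}$ is a measurable function of the sample and the stated marginals are understood per-realization or in an approximate/linearized sense), and (b) the independence of the residual branch from the attention output needed for covariances to add. I would state both as part of the standing assumption (mirroring the ``under certain assumptions'' caveat the paper already makes) rather than try to prove them, since with a genuine softmax nonlinearity exact independence fails and the proposition is really an idealized model of the distribution shift.
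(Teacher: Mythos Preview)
Your proposal is correct and follows essentially the same route as the paper's proof: both treat $M_{Q,K}$ as fixed, push the Gaussian law through the linear map $W_v$, use the sum-of-independent-Gaussians identity to get the $\|M^i_{Q,K}\|_2^2$ factor, and then apply the residual-plus-linear wrapper. If anything you are more explicit than the paper about the two places where an assumption is being made (conditioning on $M_{Q,K}$, and independence of the residual branch from the attention output), which the paper's proof simply uses without flagging.
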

The proof is provided in Supplementary Materials. We remark that \textit{(\textbf{i})} The Gaussian assumption is common and basic in modern neural networks~\cite{bn,layernorm,InstanceNT}, as it is efficient to characterize feature distributions. Here we leverage its properties, particularly additivity, to analyze changes in distributions.  \textit{(\textbf{ii})} When tuning the weight matrix $W_q$ in \textit{SA} and \textit{CA}, the mean $\mu_{rv(z)}'$ remains unaffected, while the covariance $\Sigma_{rv(z)}'$ is primarily influenced by the modification of $\omega_{Q,K}$. However, since $W_q$ is well initialized from the pre-trained TTI model, its impact on the covariate shift is considered negligible.
 \textit{(\textbf{iii})} The weights in \textit{TA} Module, which are randomly initialized and learned from one-shot data, have the potential to greatly influence the feature distribution, whereas it is crucial to ensure the output distribution resembles the input distribution to properly fit the following pre-trained modules.

\subsection{Shift-restricted Temporal Attention Module} 
Based on the analysis, we can explicitly identify that \textit{TA} Module leads to covariate shift. To address this issue, we propose to improve this module by restricting the distribution shift. In concrete, given that each unit in the spatial grid has temporal feature $z\in \mathbb{R}^{n\times d}$, when passing \textit{TA} Module it will be transformed by the operation $\mathrm{Transformer}(z, z)$ in Eq. \eqref{eq:transfomer}. Therefore, our main objective is to minimize the diffusion loss while ensuring \textit{TA} Modules are constrained in distribution shift, \textit{i.e.}, 
\begin{equation}\label{eq:obj}
\min_\theta Loss({\varepsilon}_{\theta}), \ \text{ where \textit{TA} Modules subject to}\ 
\begin{cases} 
    \mu_{rv(z)}+W_{L}^{T}\mu_{rv(V)}+b \longrightarrow \mu_{rv(z)}\\
    \Sigma_{rv(z)}+\omega_{Q,K} W_{L}^{T}\Sigma_{rv(V)}W_{L} \longrightarrow \Sigma_{rv(z)}
\end{cases}
\end{equation}
It is worth noting that there exist trivial solutions for \textit{TA} Modules, such as vanishing all parameters. However, we seek to identify the optimal configuration that maximizes the contribution to the task.

\vspace{-3mm}
\paragraph{Plight from \textit{Layer Norm}.} As an essential operation, nevertheless, we find \textit{Layer Norm}~\cite{layernorm} in the \textit{TA} Module significantly contributes to the issue of covariate shift. The computation of Layer Norm across all dimensions, \textit{e.g.}, the mean value $\mu_z = \frac{1}{nd}\sum_{i,j=1}^{n,d}z^{i,j}$, fails to properly center the feature along the temporal dimension. This results in non-zero values for $\mu_{rv(\hat{z})}$ and $\mu_{rv(V)}$, leading to covariate shift and conflicts with the objective \eqref{eq:obj}. Experimental results demonstrate a notable discrepancy between $\mu_{rv(z)}$ and $\mu_z$, with the former ranging from $1e-1$ to $1$, while the latter is typically less than $1e-2$. Therefore, we can conclude that \textit{\textit{Layer Norm} inadequately centers the feature in the temporal dimension, leading to a shift in the mean value.} This conclusion is supported not only by theoretical analysis but also by compelling empirical evidence (see Section \ref{Sec: abla} and Figure \ref{fig:abla1}), where even minor refinements in \textit{Layer Norm} can yield noticeable improvements in the results.

\vspace{-3mm}
\paragraph{Instance Centering layer.} To address the issue of \textit{Layer Norm} in \textit{TA} Module, we propose  \textit{Instance Centering} ($IC$) as an effective replacement, which operates on temporal data $z\in \mathbb{R}^{n\times d}$ by
\begin{equation}
IC(z) = z - \frac{1}{n}\Sigma_{i=1}^{n} z^{i}.
\end{equation}
When applied to a batch of temporal data $z\in \mathbb{R}^{l\times n\times d}$, it is defined to individually operate on each sample. Consequently, the $IC$ layer guarantees that $\mu_{rv(\hat{z})}$, $\mu_{rv(V)}$, and $W^{T}\mu_{rv(V)}$ in Objective (\ref{eq:obj}) vanish in principle. In this way, the new parameters in \textit{TA}, except for the bias, would not affect the mean of distribution. Importantly, $IC$ layer preserves the temporal variance of the data. This deviation from the \textit{Instance Norm}~\cite{InstanceNT}, which normalizes the data along the sequence dimensions, offers not only computational efficiency, resulting in about 20\% increase in computational speed, but also plays a role in controlling variance shifts associated with subsequent techniques.

\vspace{-3mm}
\paragraph{Normalized linear mapping.}
Due to the interaction of multiple variables, the change of temporal variance in \textit{TA} Module is unavoidable. We can establish the relation using norm properties:
\begin{equation}
\|\Sigma_{rv(z)}' - \Sigma_{rv(z)}\| \leq \| W_{L}\|^2 \| W_{v} \|^2 \|\Sigma_{rv(\hat{z})}\|.
\end{equation}
When considering the Frobenius norm, this equation indicates that the change in each element of the covariance is controlled by the rescaling coefficient
 in $Norm$, matrix norm of $W_{L}$ in $Linear$, and $W_{v}$ in \textit{TA}. Therefore, it implies that we can mitigate the variance shift by regularizing these values. From this we design the following improvements: \textit{(\textbf{a})} We do not employ the rescaling operator in $IC$ since the order of most temporal feature variances is varied from $1e-2$ to $1e-1$, meaning the operation would amplify the variance and magnifying the covariate shift. This observation is also supported by qualitative results (see Figure \ref{fig:abla1}). \textit{(\textbf{b})} Instead of adding a regularization term to loss, we employ a more efficient technique called spectral normalization ($SN$)~\cite{Miyato2018SpectralNF} to rescale $\bar{W} =W / \sigma_{\text{max}}(W)$, where $\sigma_{\text{max}}$ is the spectral norm of matrix. $SN$ has been widely used in various tasks and offers better stability and theoretical guarantees compared to other weight normalization~\cite{Salimans2016WeightNA}. In our approach, we leverage $SN$ to redefine the $\mathrm{Attention}$ and $\mathrm{Linear}$ layers as follows:
\begin{equation}\label{eq:norm_attn}
\bar{\mathrm{Attention}}(z,c) =M_{Q,K}z\bar{W}_{v},\ \text{and} \ \bar{Linear}(z) = z\bar{W}_{L}+b.
\end{equation}

\paragraph{STAM.} 
The STAM layer is an extension of Transformer in Eq. \eqref{eq:transfomer} and formulated as follows:
\begin{equation}\label{eq:STAM}
\mathrm{STAM}(z) = z+\bar{Linear}(\bar{\mathrm{Attention}}(IC(z),IC(z))).
\end{equation}
We can prove that the feature vector in $\mathrm{STAM}(z)$ follows a distribution $N(\mu_{rv(z)}+b, \Sigma'_{rv(z)})$, where $ \|\Sigma_{rv(z)}'\| \leq 2\|\Sigma_{rv(z)}\|$ under the spectral norm. Since the bias $b$ is initialized from zero, the impact on $\mu_{rv(z)}$ is negligible under a small learning rate. Therefore, this design effectively alleviates the covariate shift in theory, enabling the improvement of semantic disparity in practice.

\subsection{Fine-coarse Frame Attention Module}

Considering the set $\mathcal{Z} = \{z^{j}\in \mathbb{R}^{l\times d}\}_{j=1}^{n}$, which comprises features of all frames, \textit{Tune-A-Video} introduces a Sparse-Casual Attention (\textit{SCA}) mechanism to enhance the \textit{SA} with sparse-temporal interaction, where only the first frame and its immediate previous frame are used for interaction, instead of considering all frames. This significantly reduces the computational complexity from $\mathcal{O}(n^2l^2)$ to $\mathcal{O}(2nl^2)$. While \textit{SCA} is more effective than SA, it is not optimal for video editing tasks, in which global spatial-temporal relationships are crucial to ensure overall coherence. To address this issue, we propose the Fine-coarse Frame Attention Module (FFAM) as a new solution. FFAM leverages a fine-coarse strategy to incorporate fine-grained information from the current frame with coarse features from other frames. Specifically, FFAM is defined as follows:
\begin{equation}
\begin{aligned}
    &\text{FFAM}(z^{i};\mathcal{Z}) = \mathrm{Attention}(z^{i},z_{FC}), \\
    &\text{where}\ z_{FC}=\mathrm{concat}[z^{i},\{\mathrm{downsample}(z^{j},r)\}_{j\neq i}] \in \mathbb{R}^{(\frac{n-1+r^2}{r^2}l)\times d},
\end{aligned}
\end{equation}
where $\mathrm{downsample}(\cdot,r)$ rescales the input with ratio $1/r^2$. The design of FFAM respects the rule of \textit{SA} to leverage data in the current frame, and uses coarse features sampled from other frames as an aid. In contrast to \textit{SCA}, which samples data over time to improve efficiency, FFAM adopts a robust way that condenses data along the spatial dimension, while retaining the structural relationships in both spatial and temporal dimensions. It is worth noting that FFAM reduces the complexity to $\mathcal{O}(\frac{n-1+r^2}{r^2}nl^2)$. In experiments, when $r$ is set to 2 or 4, FFAM obtains improved temporal consistency while achieving comparable speed to \textit{SCA}. This trade-off between \textit{SA} and global spatial-temporal interaction enables FFAM to strike a balance between efficiency and consistency in video tasks.


\section{Experiments}
\vspace{-3mm}
\paragraph{Implementation details.}
Our implementation of $\text{EI}^2$ is based on the stable diffusion v1-4 framework\footnote{Stable Diffusion: \url{https://huggingface.co/CompVis/stable-diffusion-v1-4}}. We keep the Autoencoder of LDM frozen and tune $W_Q$ of the FFAM and \textit{CA} Modules, and all parameters of STAMs.  We follow previous works~\cite{wu2022tune} to perform tuning on 8-frame videos of size $512 \times 512$. We utilize the AdamW optimizer with a learning rate of $3e-5$ for a total of $500$ steps. During inference, we initialize the model from the DDIM inversion\cite{song2020denoising} and set the default classifier-free guidance~\cite{ho2022classifier} to  $7.5$. All experiments are conducted on an NVIDIA Tesla V100 GPU.

\vspace{-3mm}
\paragraph{Baselines.}
In view of the limited accessible resource in this field, we select the three representative methods as baselines.
\textit{(1) Tune-A-Video}~\cite{wu2022tune}: The current SOTA in the field and the conventional baseline for related works. 
 \textit{(2) Vid2Vid-zero}~\cite{Wang2023ZeroShotVE}: A TTI-based zero-shot video editing method without fine-tuning.
 \textit{(3) Video-P2P}~\cite{liu2023video}: An improvement method over  
 \textit{Tune-A-Video}
 via using P2P~\cite{hertz2022prompt} and Null-Text~\cite{mokady2022null}.
Experiments are conducted with their official codes and configurations.

\vspace{-3mm}
\paragraph{Datasets.}  Following  previous works~\cite{wu2022tune,liu2023video}, we collect videos from the DAVIS dataset~\cite{davis} for comparison. We also gather face videos from the Pexels website to assess the fine-grained editing in the face domain. We utilize a captioning model~\cite{Li2023BLIP2BL} to automatically generate the text prompts.

\subsection{Comparison}
\paragraph{Qualitative results.} We present qualitative comparison results between $\text{EI}^2$ and baselines in Figure~\ref{fig:comparison}. Due to limited space, additional results are provided in the \textit{Supplementary Materials}. We evaluate the performance of the methods using different general scenarios including object and style editing.
The following conclusions can be drawn from the results: \textit{({1})} \textit{Tune-A-Video}~\cite{wu2022tune} tends to performs better in coarse-level object replacement but poorly in terms of style. This is evident in its attempt to forcefully fit the “surfing” action for object replacement, resulting in noticeable artifacts in the generated results. It also fails to produce the desired stylistic effects and still maintains the photorealistic style of the original videos. \textit{(2)} \textit{Video-P2P}~\cite{liu2023video}, based on \textit{Tune-A-Video}, improves consistency with the original video through attention control but struggles with global style as before.  
\textit{(3)} \textit{Vid2Vid-zero}~\cite{Wang2023ZeroShotVE} does not use the \textit{TA} Module and adopts the parameters of the source model entirely, avoiding the issue of covariate shift. 
However, the lacking of fine-tuning procedure leads to the edited results could not faithfully preserve the characteristics of source videos. \textit{(4)} Compared to previous works, our approach combines the fidelity to the original video from \textit{Tune-A-Video} with the editing power of the TTI model from \textit{Vid2Vid-zero}. For instance, in the case of “surfing”, $\text{EI}^2$ effectively preserves the motion of source videos while creating frames suitable for characters, instead of blindly copying the source motion and introducing artifacts. Our experiments demonstrate that $\text{EI}^2$ exhibits significant visual advantages over the competitors.

\begin{figure}[t]
    \centering
    \includegraphics[width=1\linewidth]{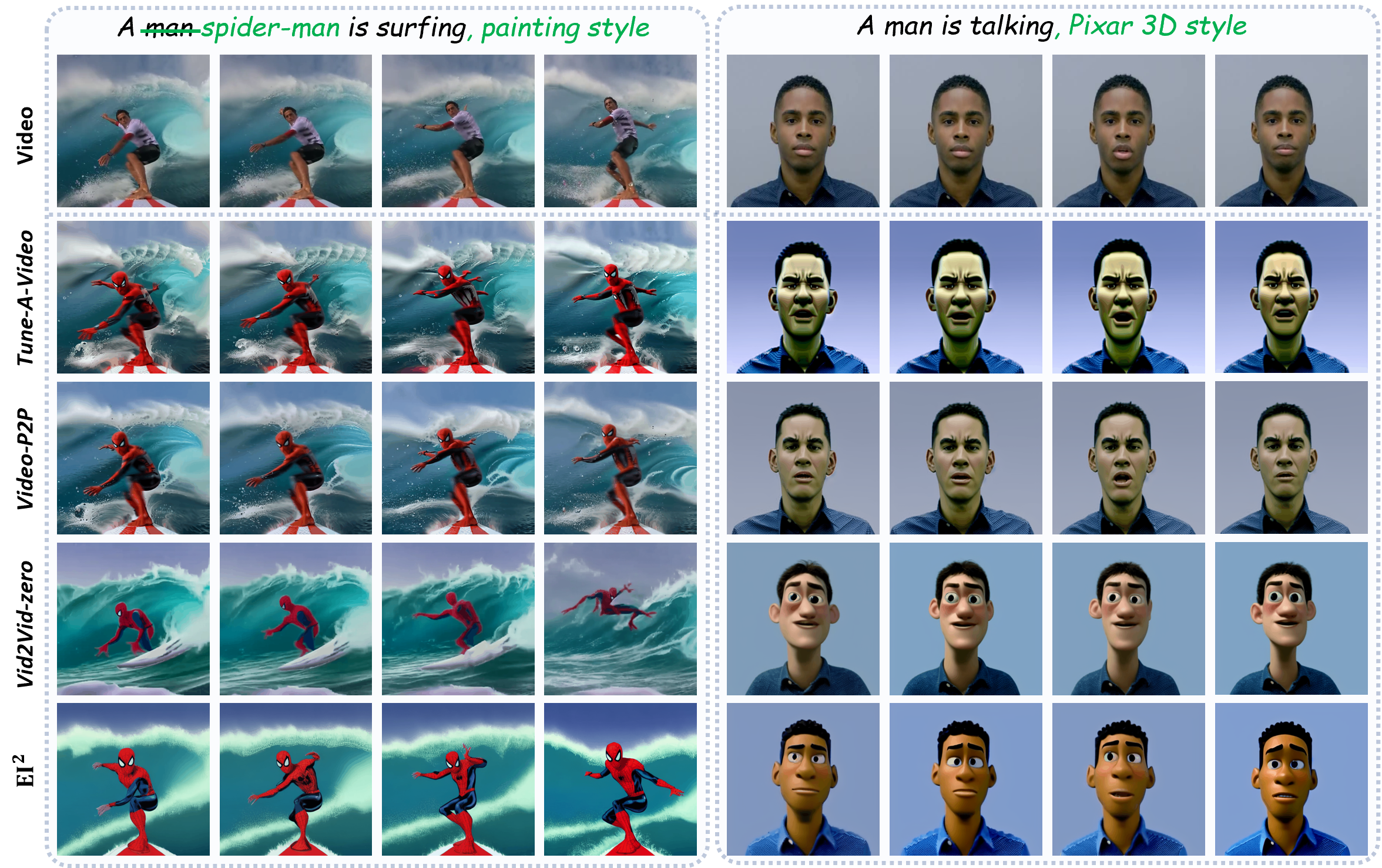}
    \vspace{-3mm}
    \caption{Comparison of $\text{EI}^2$ with \textit{Tune-A-Video}~\cite{wu2022tune}, \textit{Video-P2P}~\cite{liu2023video}, and \textit{Vid2Vid-zero}~\cite{Wang2023ZeroShotVE}. 
The black text in the first line represents the video caption, while the green text indicates the prompt for revision. The outputs generated by $\text{EI}^2$ exhibit greater temporal continuity and semantic consistency.}
    \label{fig:comparison}
    \vspace{-5mm}
\end{figure}

\begin{table}[t]
\caption{Quantitative comparison with evaluated baselines. The “Training” refers to the process of optimization and DDIM inversion~\cite{ho2020denoising}, and “Memory” refers to the peak footprint of GPU device.}
\vspace{-2mm}
\centering
\setlength\tabcolsep{0.3pt}
\small
\begin{tabular}{c|cc|cc|cc|cc}
\hline\hline
\multirow{2}{*}{\textbf{Method}} & \multicolumn{2}{c|}{\textbf{Frame consistency}} &  \multicolumn{2}{c|}{\textbf{Textual alignment}} & \multicolumn{2}{c|}{\textbf{Runtime} [min]} & \multicolumn{2}{c}{\textbf{Memory} [Gib]}  \\ \cline{2-9}
              & CLIP Score$\uparrow$ & User Vote$\uparrow$ & CLIP Score$\uparrow$ & User Vote$\uparrow$ & Training$\downarrow$ & Inference$\downarrow$  & Training$\downarrow$ & Inference$\downarrow$  \\ \hline
\textit{Tune-A-Video}~\cite{wu2022tune}      & 96.64      & 25.6\%           &28.72      & 15.4\%  & 11.0 & \textbf{0.5} & \textbf{9.6} & \textbf{5.3}       \\
\textit{Video-P2P}~\cite{liu2023video}      & 96.84      & 28.3\%           &28.24      & 19.5\% & 19.2 & 2.0& 27.3 &19.5         \\
\textit{Vid2Vide-zero}~\cite{parmar2023zero}    & 95.17      & 11.2\%         & 29.39      & 22.4\% & \textbf{2.5}\  & 2.5 & 17.2 & 23.3        \\ \hline
$\text{EI}^2$            & 95.94 & \textbf{34.9\%} & \textbf{29.84}  & \textbf{42.7\%} & 12.5 & \textbf{0.5} &11.0 &\textbf{5.3} \\ \hline\hline
\end{tabular}%
\label{tab:quan}
\vspace{-2mm}
\end{table}

\vspace{-3mm}
\paragraph{Quantitative results.}
In line with previous studies~\cite{wu2022tune,liu2023video,Wang2023ZeroShotVE}, we evaluate methods using CLIP score~\cite{hessel2021clipscore} and user study to assess frame consistency and textual alignment. \textit{(1) CLIP score}: To evaluate frame consistency, we calculate CLIP~\cite{radford2021learning} image embedding for all frames in the edited videos and report the average cosine similarity between pairs of video frames. For measuring textual faithfulness, we compute the average CLIP score between frames of output videos and corresponding edited prompts. We employ 15 videos from the dataset and edit them in terms of object and style, resulting in a total of 75 edited videos for each model. Table \ref{tab:quan} details the average results, which indicate that our method exhibits the strongest ability to achieve semantic alignment. Nevertheless, it is important to note that the automatic evaluation may not align perfectly with human perception~\cite{molad2023dreamix}, thus should only serve as reference scores. \textit{(2) User study}: We present 30 pairs of videos and text prompts to volunteers and ask them to vote for edited videos with the best temporal consistency and those that best match the textual description. We collect valid votes from 50 volunteers. Table \ref{tab:quan} illustrates that EI$^2$ receives the highest number of votes in both aspects, indicating that our method achieves superior editing quality and is preferred by users in practice.

\vspace{-3mm}
\paragraph{Resource consumption.} The practical resource consumption of video editing is a significant consideration. The results in Table \ref{tab:quan} show that EI$^2$ introduces only a marginal increase in computational overhead compared to \textit{Tune-A-Video}, while achieving a substantial improvement in performance. This indicates that our approach effectively balances the trade-off between resource utilization and editing performance, making it a practical and efficient solution for video editing task.

\subsection{Ablation Study}\label{Sec: abla}
\begin{figure}[t!]
    \centering   
    \includegraphics[width=1\linewidth]{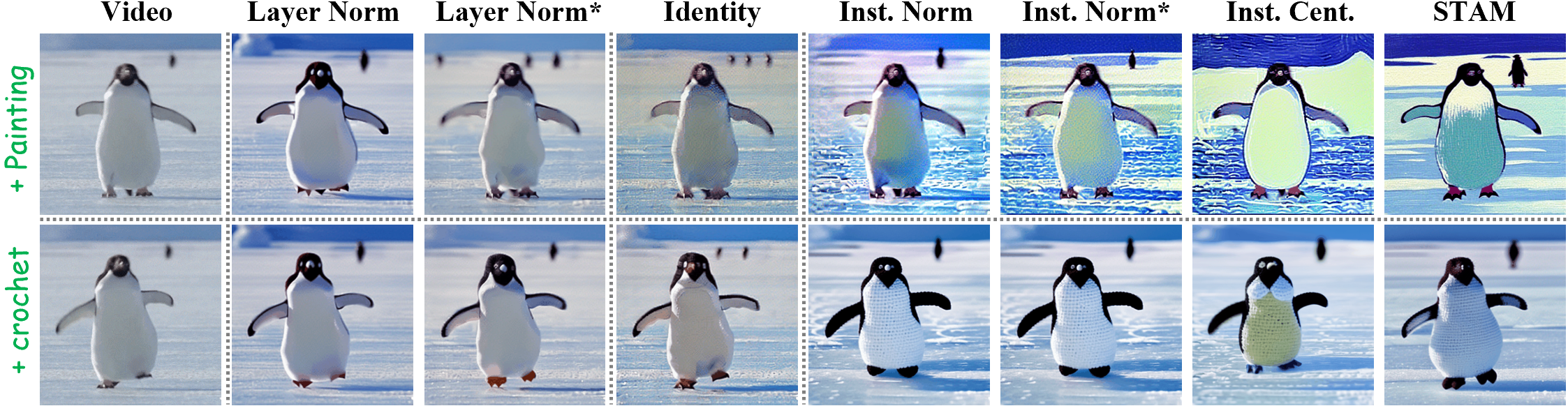}
    \vspace{-3mm}
    \caption{Ablation on the restriction of covariate shift. “\textbf{*}” means that the operator does not use learnable weights.  Replacing \textit{Layer Norm} with instance-based operator can obviously mitigate the covariate shift and has a very distinct effect on improving textual guidance.}
    \label{fig:abla1}
    \vspace{-2mm}
\end{figure}

\begin{figure}[t!]
    \centering
    \includegraphics[width=1\linewidth]{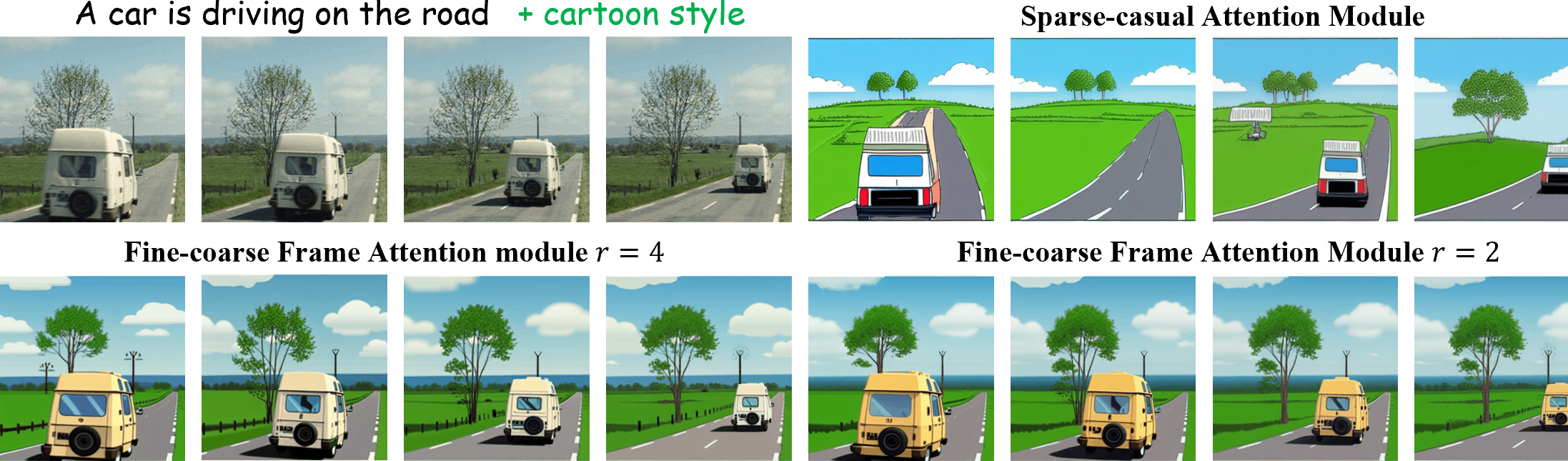}
    \vspace{-3mm}
    \caption{Ablation on the spatial-temporal attention mechanism. Compared with previous \textit{SCA} Module~\cite{wu2022tune}, our FFAM can enhance the temporal consistency even with a large scale ratio. }
    \label{fig:abla2}
    \vspace{-1mm}
\end{figure}

\vspace{-2mm}
\paragraph{Restricting covariate shift.} As the core of our research, we evaluate the impact of the covariate shift problem on video editing performance. According to the theoretic analysis in Sec. \ref{sec: theoretic}, we focus on the \textit{Layer Norm} within the \textit{TA} Module of \textit{Tune-A-Video} and make modifications while keeping the rest of the architecture unchanged. These modifications are performed based on the official code of \textit{Tune-A-Video}. Figure \ref{fig:abla1} presents the results of these experiments. In the second column, it is observed that the vanilla \textit{Tune-A-Video} tends to restrain the textual guidance, leading to inadequate editing performance. Removing the rescaling parameters of the \textit{Layer Norm} or completely eliminating the module itself does not effectively alleviate the covariate shift problem. However, when we replace the module with the \textit{Instance Norm}~\cite{InstanceNT}, a notable improvement in performance is achieved. Moreover, incorporating \textit{IC} and weight normalization further enhance the results. These findings indicate a significant relationship between covariate offset and semantic disparity, affirming the effectiveness of our approach in mitigating these challenges.

\vspace{-3mm}
\paragraph{Fine-coarse Frame Attention.} Figure \ref{fig:abla2} illustrates the effectiveness of the fine-coarse interaction in improving temporal consistency. When combining the \textit{SCA} Module with STAM, it cannot effectively learn temporal information, resulting in obvious flickering. In contrast, FFAM effectively addresses this issue, ensuring superior editing quality and temporal coherence in our model. It is important to note that the scale ratio used in FFAM needs to be within the range of the LDM feature map, which ranges from 8 to 64. Moreover, applying full spatial-temporal attention to 8 frames requires approximately 32GiB memory footprint and a significant amount of time ($\sim$45min) for training, which is not practical in application. In this regard, our FFAM provides a flexible alternative to spatial-temporal interaction, addressing the limitations associated with resource consumption while still achieving satisfactory results in terms of temporal consistency.

\section{Conclusion}
In this paper, we propose the EI$^2$ model to tackle inconsistent editing results in expanding TTI models for video editing. Our analysis reveals that semantic inconsistency arises from new parameters in temporal attention, leading to covariate shift and compromising editing capability. The EI$^2$ model comprises two key modules: the Shift-restricted Temporal Attention module (STAM) and the Fine-coarse Frame Attention module (FFAM). STAM employs a simplified Instance Centering layer to eliminate mean shift and normalized attention mapping to constrain variance shift. FFAM, integrated with STAM, effectively utilizes fine-coarse spatial information across frames to enhance temporal consistency. Extensive experiments confirm the validity and effectiveness of our approach. We hope our work provides insights for future research in related fields.

\vspace{-3mm}
\paragraph{Limitations.} Our work has two main limitations. Firstly, while we believe that our analysis and proof of the covariate shift provides valuable insights into related issues, it heavily relies on the Gaussian assumption, which cannot hold exactly in practical settings. Secondly, although our method demonstrates stronger editing capabilities, it still suffers from failures of temporal consistency in some cases, such as when replacing the “surfing man” with animals. We observe that $\text{EI}^2$ performs better in most style editing, but the replacement for objects is limited to similar attributes. Further investigation and improvement are necessary to address these limitations.

\bibliographystyle{bibstyle}
\bibliography{citation}
\newpage

\maketitle

\appendix
\section{Details of Theoretic Analysis and Proof}
In this section, we illustrate the details of analysis and proof. For clarity, Some symbols may differ slightly from the paper.
\subsection{Definitions and notations}
\paragraph{Attention mechanism.} Given a spatial-temporal feature $z\in \mathbb{R}^{b\times n\times l_1 \times d_1}$, and textual embedding $c\in \mathbb{R}^{b\times n \times l_2 \times d_2}$, where $b, n, l, d$ denote the lengths of batch, frame, sequence and feature dimensions, the attention mechanism obtains three elements Query $Q$, Key $K$ and Value $V$ by $Q=zW_{q},\ K=cW_{k}, \ V=cW_{v}$, where $W_{q}\in \mathbb{R}^{d_1 \times d}$, $W_{k}$ and $W_{v} \in \mathbb{R}^{d_2 \times d}$.
After that, they will interact to generate the transferred feature via
 \begin{equation}\label{eq:attn appendix}
    \mathrm{Attention}(z,c) =M_{Q,K}V,\ \text{where}\  M_{Q,K}= \mathrm{softmax}({QK^T}/{\sqrt{d}}).
\end{equation}

\paragraph{Layer Normalization.} Considering  data $x\in \mathbb{R}^{l \times d}$, the \textit{Layer Norm}~\cite{layernorm} is defined  as 
\begin{equation}\label{eq: norm}
    Norm(x)=\alpha\frac{x-\mu_x}{\sigma_{x}}+\beta,\ \text{where}\ \mu_x=\frac{1}{ld}\Sigma^{l,d}_{i,j=1}x^{ i,j},\ \sigma_{x} = \sqrt{\frac{1}{ld}\Sigma^{l,d}_{i,j=1}(x^{i,j}-\mu_{x})^2},
\end{equation}
where $\alpha$ and $\beta$ are learnable scalar parameters. 
\begin{remark}
    In practice, there is an alternative definition of \textit{Layer Norm}, where $\mu_x$ and $\sigma_{x}$ are computed only along feature dimension, \textit{e.g.}, $\mu_x=\frac{1}{d}\Sigma^{d}_{j=1}x^{\cdot,j}$. Both the two definitions are appropriate for the subsequent analysis.
\end{remark}

\paragraph{Transformer module.}
A classical Transformer module in LDM~\cite{rombach2022high} has the residual structure:
\begin{equation}\label{eq:transfomer appendix}
    \mathrm{Transformer}(z,c) = z+Linear(\mathrm{Attention}(norm(z), norm(c))),
\end{equation}
where $Linear(z)=zW_{L}+b_{L}$, $W_{L}$ and $b_{L}$ are learnable parameters.
\begin{remark}
    In LDM~\cite{rombach2022high}, the Self Attention (\textit{SA}) Module is defined by 
    \begin{equation}
        \mathrm{Transformer}(z) = z+Linear(\mathrm{Attention}(norm(z), norm(z)),
    \end{equation}
    and the Cross Attention (\textit{CA}) Module is defined by 
    \begin{equation}
        \mathrm{Transformer}(z,c) = z+Linear(\mathrm{Attention}(norm(z), c),
    \end{equation}
    We generalize and unify them into a common form in Eq. \eqref{eq:transfomer appendix} for the sake of analysis.
\end{remark}

\subsection{Distribution transition in Transformer Module}
\begin{proposition}\label{prop1}
     We assume that for Transformers in Eq. \eqref{eq:transfomer appendix}, any input feature $z\in \mathbb{R}^{l\times d}$ (or $\mathbb{R}^{n\times d}$) is a sample of i.i.d $d$-dimensional Gaussian random variables, denoted as $rv(z)\sim N(\mu_{rv(z)}, \Sigma_{rv(z)})$, and $\hat{z} = Norm(z)$, same notation for $c$.
     (\textbf{i}) For the output from Eq. \eqref{eq:attn appendix}, the $i$-th row vector variable, $\mathrm{Attention}(\hat{z}, \hat{c})^{i}$,  follows the Gaussian distribution $N(\mu_{rv(V)}, \omega_{Q,K} \Sigma_{rv(V)})$, where $\mu_{rv(V)} = W_{v}^{T}\mu_{rv(\hat{c})}$,  $\Sigma_{rv(V)} = W_{v}^{T}\Sigma_{rv(\hat{c})}W_{v}$, and $\omega_{Q,K} = \|M^{i}_{Q,K}\|^2_{2}$.
     (\textbf{ii}) For the output from Eq. \eqref{eq:transfomer appendix}, each row vector variable in $\mathrm{Transformer}(z, c)$ follows the Gaussian distribution $N(\mu_{rv(z)}', \Sigma_{rv(z)}')$, where $\mu_{rv(z)}' = \mu_{rv(z)}+W_{L}^{T}\mu_{rv(V)}+b_{L}$, and $\Sigma_{rv(z)}'=\Sigma_{rv(z)}+\omega_{Q,K} W_{L}^{T}\Sigma_{rv(V)}W_{L}$.
\end{proposition}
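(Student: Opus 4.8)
The plan is to track how a Gaussian input propagates through each operation in the Transformer block, using the fact that affine maps of Gaussians are Gaussian and that independent Gaussians add. I would treat the rows of $z$ (and $c$) as i.i.d.\ draws of $rv(z)\sim N(\mu_{rv(z)},\Sigma_{rv(z)})$, and first record what $Norm$ does: applying \textit{Layer Norm} row-wise is an affine transformation of each row, so $\hat z = Norm(z)$ again has i.i.d.\ Gaussian rows with some mean $\mu_{rv(\hat z)}$ and covariance $\Sigma_{rv(\hat z)}$ (and likewise for $\hat c$); I would just name these and carry them forward, since the precise values are not needed for the statement.

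For part (i), I would compute $V = \hat c\, W_v$, so each row of $V$ is $W_v^{T} rv(\hat c)$, giving i.i.d.\ rows with mean $\mu_{rv(V)} = W_v^{T}\mu_{rv(\hat c)}$ and covariance $\Sigma_{rv(V)} = W_v^{T}\Sigma_{rv(\hat c)}W_v$. The $i$-th output row of $\mathrm{Attention}(\hat z,\hat c)$ is $M^{i}_{Q,K}V = \sum_j (M^{i}_{Q,K})_j V^{j}$, a fixed linear combination of the independent Gaussian rows $V^{j}$. Hence it is Gaussian with mean $\big(\sum_j (M^{i}_{Q,K})_j\big)\mu_{rv(V)}$; here I would invoke that each row of $M_{Q,K}$ is a softmax output and therefore sums to $1$, so the mean is exactly $\mu_{rv(V)}$. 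The covariance is $\sum_j (M^{i}_{Q,K})_j^2\,\Sigma_{rv(V)} = \|M^{i}_{Q,K}\|_2^2\,\Sigma_{rv(V)} = \omega_{Q,K}\Sigma_{rv(V)}$, using independence across $j$. One subtlety to flag: $M_{Q,K}$ depends on $Q=\hat z W_q$ and $K=\hat c W_k$, hence on the same random rows, so strictly the coefficients are not independent of the $V^{j}$; I would either condition on $M_{Q,K}$ (treating the attention weights as approximately deterministic, consistent with the paper's stated level of rigor under the Gaussian assumption) or note this as the modeling simplification being made.

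For part (ii), I would write $\mathrm{Transformer}(z,c) = z + Linear(A)$ where $A=\mathrm{Attention}(norm(z),norm(c))$ from part (i). Each row of $Linear(A)$ is $W_L^{T} A^{i} + b_L$, so by part (i) it is Gaussian with mean $W_L^{T}\mu_{rv(V)} + b_L$ and covariance $\omega_{Q,K}W_L^{T}\Sigma_{rv(V)}W_L$. Adding the residual term $z^{i}$, which is independent Gaussian with mean $\mu_{rv(z)}$ and covariance $\Sigma_{rv(z)}$, and using additivity of independent Gaussians, gives mean $\mu_{rv(z)}' = \mu_{rv(z)} + W_L^{T}\mu_{rv(V)} + b_L$ and covariance $\Sigma_{rv(z)}' = \Sigma_{rv(z)} + \omega_{Q,K}W_L^{T}\Sigma_{rv(V)}W_L$, as claimed.

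The main obstacle is the dependence issue noted above: the attention weights $M_{Q,K}$ are functions of the same Gaussian rows that feed $V$, and also correlate different output rows with each other, so the i.i.d.-rows picture is not literally preserved and the linear-combination argument is not exactly a sum of independent terms. I expect to handle this by conditioning on (equivalently, fixing) $M_{Q,K}$ — making $\omega_{Q,K}$ a constant in the statement — which is the implicit convention throughout this analysis and is consistent with the paper's acknowledged reliance on the Gaussian assumption; everything else is routine affine-Gaussian bookkeeping.
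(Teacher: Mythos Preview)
Your proposal is correct and follows essentially the same route as the paper's own proof: view the rows as i.i.d.\ Gaussian, push them through the affine $Norm$, compute the row-distribution of $V$, take the softmax-weighted combination (using that the weights sum to $1$ for the mean and $\|M_{Q,K}^i\|_2^2$ for the covariance), and then add the residual plus $Linear$ in part (ii). If anything you are more careful than the paper, which silently treats $M_{Q,K}$ as fixed and the residual term as independent of the attention output; your explicit conditioning-on-$M_{Q,K}$ remark is exactly the right way to state the modeling simplification.
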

\begin{proof}
    The proof can be obtained directly by using the property of Gaussian distribution:

    Under the assumption, $z$ can viewed as the sample from $[rv(z)^T_{1};\dots;rv(z)^T_{l}]$, where $rv(z)_{i}, i=1,\dots, l$ are \textit{i.i.d} and follow  $N(\mu_{rv(z)}, \Sigma_{rv(z)})$.
    Given the affine mapping in Eq. \eqref{eq: norm},  
    it is intuitive that the random variable $rv(\hat{z}) = \alpha\frac{rv(z)-\mu_z}{\sigma_{z}}+\beta$ follows Gaussian distribution, hence $\hat{z}$ can be viewed as sample from $[rv(\hat{z})^T_{1};\dots;rv(\hat{z})^T_{l}]$. Similar conclusion can be obtained for $c$, thus each row vector in $V$ of Eq. \eqref{eq:attn appendix} follows the Gaussian distribution, where the random variable \begin{equation}
        rv(V) = W_{v}^{T}rv(\hat{c})\sim N(W_{v}^{T}\mu_{rv(\hat{c})}, W_{v}^{T}\Sigma_{rv(\hat{c})}W_{v}).
    \end{equation}

    \textit{(\textbf{i})} Considering that the $i$-th row vector from Eq. \eqref{eq:attn appendix}, 
    \begin{equation}
        A^i:=\text{Attention}(\hat{z}, \hat{c})^{i}=\Sigma_{j=1}^{l}M_{Q,K}^{i,j}(\hat{c}W_{v})^{j}.
    \end{equation}
    According to the additive property of Gaussian distribution, the random variable 
    \begin{equation}
        \begin{aligned}
            &rv(A^i) = \Sigma_{j=1}^{l}M_{Q,K}^{i,j}rv(V)_{j}\sim N(\mu_{rv(V)}, \omega_{Q,K}\Sigma_{rv(V)}), \\ 
            &\text{where}\ \ \omega_{Q,K} = \Sigma_{j=1}^{l}({M_{Q,K}^{i,j}})^2 = \|M_{Q,K}^i\|^2_2.
        \end{aligned}
    \end{equation}
    \textit{(\textbf{ii})} For each $i=1,\dots,l$, we have 
    \begin{equation}
        \mathrm{Transformer}(z,c)^{i} = z^{i}+\mathrm{Attention}(\hat{z}, \hat{c})^iW_{L} + b_{L},
    \end{equation}
    Therefore, we can derive the random variable
    \begin{equation}
    \begin{aligned}
        &rv(z) + W_{L}^{T}rv(A^i)+b_{L}\sim \\ &N(\mu_{rv(z)}+W_{L}^{T}\mu_{rv(V)}+b_{L},\  \Sigma_{rv(z)}+\omega_{Q,K} W_{L}^{T}\Sigma_{rv(V)}W_{L}).
    \end{aligned}
    \end{equation}

\end{proof}

\subsection{Covariate shift for tuning $W_q$}

Since \textit{Tune-A-Video} solely tunes the parameters of Transformers including $W_q$ in \textit{SCA} and \textit{CA}, which inherited from pre-trained LDM, to prevent covariate shift,  $\mu_{rv(z)}'$ and $\Sigma_{rv(z)}'$ should not be largely changed before and after tuning. At first,  we can observe that $\mu_{rv(z)}'$ is unrelated to $W_q$, hence tuning $W_q$ does not affect the mean value. For $\Sigma_{rv(z)}'$, we can conclude that tuning $W_q$ has little impact on it through the following theoretic analysis.  

\begin{lemma}\label{lamma}
    Given that $x \in \mathbb{R}^{n}$, the softmax function is defined by \begin{equation}
\mathrm{softmax}(x)=\frac{1}{\sum_{j=1}^n \exp \left(\lambda x_j\right)}\left[\begin{array}{c}
\exp \left(\lambda x_1\right) \\
\vdots \\
\exp \left(\lambda x_n\right)
\end{array}\right], \lambda>0.
\end{equation}
The softmax function is L-Lipschitz with respect to $\|\cdot\|_2$ with $L=\lambda$, that is, for all $z, z^{\prime} \in \mathbb{R}^n$,
$$
\left\|\mathrm{softmax}(x)-\mathrm{softmax}\left(x^{\prime}\right)\right\|_2 \leq \lambda\left\|x-x^{\prime}\right\|_2.
$$
\end{lemma}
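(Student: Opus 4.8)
\textbf{Proof proposal for Lemma~\ref{lamma}.}

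The plan is to show that the softmax map has Jacobian with operator norm bounded by $\lambda$ everywhere, and then invoke the mean-value inequality for vector-valued maps. First I would compute the Jacobian $J(x)$ of $\mathrm{softmax}$ at an arbitrary point $x\in\mathbb{R}^n$. Writing $p=\mathrm{softmax}(x)$ (so $p_i>0$ and $\sum_i p_i=1$), a direct differentiation gives $\partial p_i/\partial x_j = \lambda\,(p_i\delta_{ij}-p_ip_j)$, i.e. $J(x)=\lambda\bigl(\mathrm{diag}(p)-pp^{\top}\bigr)$. The matrix $A:=\mathrm{diag}(p)-pp^{\top}$ is symmetric positive semidefinite: for any $v$, $v^{\top}Av = \sum_i p_i v_i^2 - (\sum_i p_i v_i)^2 = \mathrm{Var}_{p}(v)\ge 0$ by Jensen / Cauchy--Schwarz with the probability weights $p_i$. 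Hence $\|A\|_2=\lambda_{\max}(A)$, and since $v^{\top}Av=\mathrm{Var}_p(v)\le \sum_i p_i v_i^2 \le \max_i v_i^2 \le \|v\|_2^2$ (using $0\le p_i\le 1$ and $\sum p_i = 1$), we get $\lambda_{\max}(A)\le 1$, so $\|J(x)\|_2\le\lambda$ for every $x$.

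Next I would pass from the pointwise Jacobian bound to the global Lipschitz bound. For $x,x'\in\mathbb{R}^n$, parametrize the segment by $\gamma(t)=x'+t(x-x')$, $t\in[0,1]$, and write
\begin{equation}
\mathrm{softmax}(x)-\mathrm{softmax}(x') = \int_0^1 J(\gamma(t))\,(x-x')\,dt.
\end{equation}
Taking $\|\cdot\|_2$ and using $\|J(\gamma(t))(x-x')\|_2\le \|J(\gamma(t))\|_2\,\|x-x'\|_2\le \lambda\|x-x'\|_2$ pointwise, the integral triangle inequality yields $\|\mathrm{softmax}(x)-\mathrm{softmax}(x')\|_2\le \lambda\|x-x'\|_2$, which is exactly the claim.

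I do not expect a serious obstacle here; the only mildly delicate point is the clean bound $\lambda_{\max}(\mathrm{diag}(p)-pp^{\top})\le 1$, which I would justify via the variance interpretation $v^{\top}Av=\mathrm{Var}_p(v)$ and the elementary inequality $\mathrm{Var}_p(v)\le \mathbb{E}_p[v^2]\le \|v\|_\infty^2\le\|v\|_2^2$. (An even sharper constant $1/2$ is available but is not needed, since the statement only asks for $L=\lambda$.) One should also note that $\mathrm{softmax}$ is smooth on all of $\mathbb{R}^n$, so the fundamental-theorem-of-calculus step is valid without any regularity caveats.
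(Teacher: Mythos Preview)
Your argument is correct: the Jacobian computation $J(x)=\lambda(\mathrm{diag}(p)-pp^{\top})$ is right, the variance identity $v^{\top}(\mathrm{diag}(p)-pp^{\top})v=\mathrm{Var}_p(v)$ gives the operator-norm bound $\|J(x)\|_2\le\lambda$, and the line-integral step is unproblematic since $\mathrm{softmax}$ is $C^\infty$ on $\mathbb{R}^n$. The paper does not actually prove this lemma inline; it simply cites Proposition~4 of an external reference for the result. Your self-contained Jacobian/mean-value argument is the standard route (and almost certainly what the cited source does as well), so there is no meaningful methodological difference to compare---you have supplied a proof where the paper only supplied a pointer.
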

Please refer to Proposition 4 in \cite{softmax} for the proof.
\\

\begin{proposition}
    Along with notations in Proposition \ref{prop1},  denoting the tuned results of $W_q$ as $W_q^{tuning}$, and the tuned Covariance of $\Sigma_{rv(z)}'$ as $\Sigma_{rv(z)}'^{tuning}$,   we have
    \begin{equation}
        \|\Sigma_{rv(z)}'^{tuning} - \Sigma_{rv(z)}'\| \leq \alpha\|W_q^{tuning} - W_q\|_2, 
    \end{equation}
    where $\alpha$ is a constant related to the definition of matrix norm but unrelated to the tuning process.
\end{proposition}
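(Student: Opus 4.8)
The plan is to bound the change in $\Sigma_{rv(z)}' = \Sigma_{rv(z)} + \omega_{Q,K} W_L^T \Sigma_{rv(V)} W_L$ caused by perturbing $W_q$, and to observe that only the scalar factor $\omega_{Q,K} = \|M_{Q,K}^i\|_2^2$ depends on $W_q$. So the first step is to write
\begin{equation}
\|\Sigma_{rv(z)}'^{tuning} - \Sigma_{rv(z)}'\| = \bigl|\omega_{Q,K}^{tuning} - \omega_{Q,K}\bigr| \cdot \|W_L^T \Sigma_{rv(V)} W_L\|,
\end{equation}
and set aside the factor $\|W_L^T \Sigma_{rv(V)} W_L\|$ as part of the norm-dependent constant (it does not involve the tuning of $W_q$). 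What remains is to control $|\omega_{Q,K}^{tuning} - \omega_{Q,K}| = \bigl|\,\|M^{tuning}\|_2^2 - \|M\|_2^2\,\bigr|$ where $M = \mathrm{softmax}(QK^T/\sqrt{d})$ with $Q$ the only piece touched by $W_q$.

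Next I would use the factorization $\|M^{tuning}\|_2^2 - \|M\|_2^2 = (\|M^{tuning}\|_2 + \|M\|_2)(\|M^{tuning}\|_2 - \|M\|_2)$; since each $M^i$ is a row of a stochastic matrix, $\|M^i\|_2 \le 1$, so the first factor is at most $2$, and by the reverse triangle inequality $\bigl|\,\|M^{tuning}\|_2 - \|M\|_2\,\bigr| \le \|M^{tuning} - M\|_2$. Then apply Lemma~\ref{lamma} (softmax is Lipschitz in $\|\cdot\|_2$) to get $\|M^{tuning} - M\|_2 \le \lambda \|Q^{tuning}K^T/\sqrt d - QK^T/\sqrt d\|_2 = \frac{\lambda}{\sqrt d}\|(Q^{tuning} - Q)K^T\|_2$. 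Finally, since $Q = \hat z W_q$ we have $Q^{tuning} - Q = \hat z (W_q^{tuning} - W_q)$, and submultiplicativity of the operator norm gives $\|(Q^{tuning}-Q)K^T\|_2 \le \|\hat z\|_2 \|K\|_2 \|W_q^{tuning} - W_q\|_2$. Collecting the constants $\frac{2\lambda}{\sqrt d}\|\hat z\|_2\|K\|_2\|W_L^T\Sigma_{rv(V)}W_L\|$ into a single $\alpha$ (which depends on the fixed pre-trained features and the choice of matrix norm, but not on the tuning step) yields the claimed bound; one should also note the equivalence of matrix norms on finite-dimensional spaces to absorb any mismatch between the $\|\cdot\|_2$ used inside and the ambient norm in the statement.

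The main obstacle is the bookkeeping around which quantities are legitimately "constants unrelated to the tuning process." In particular $\hat z$ and $K = \hat c W_k$ are treated as fixed inputs (the latent feature and its normalization, and $W_k$ is not tuned), so $\|\hat z\|_2$ and $\|K\|_2$ are genuinely constant — but this relies on the Gaussian/i.i.d. abstraction of Proposition~\ref{prop1} and on the fact that $\omega_{Q,K}$ there is defined per row $i$; I would need to be careful that the bound is uniform over the row index $i$, or else state it for a fixed $i$ as Proposition~\ref{prop1} does. A secondary subtlety is that $\lambda = 1/\sqrt d$ already in the softmax temperature, so the constant simplifies, but I would keep $\lambda$ symbolic to match the lemma's statement. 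No deep estimate is needed beyond the Lipschitz property of softmax already imported from \cite{softmax}.
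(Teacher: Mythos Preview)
Your proposal is correct and follows essentially the same route as the paper: isolate the scalar factor $\omega_{Q,K}$ as the only quantity depending on $W_q$, bound $|\omega_{Q,K}^{tuning}-\omega_{Q,K}|$ by $2\|M^{tuning}-M\|_2$, apply the softmax Lipschitz lemma, and expand $Q=\hat z W_q$ to extract $\|W_q^{tuning}-W_q\|_2$. The only cosmetic difference is that you factor $\|M^{tuning}\|_2^2-\|M\|_2^2$ as a difference of squared norms and use the reverse triangle inequality, while the paper writes it as $(M^{tuning}-M)^T(M^{tuning}+M)$ and applies Cauchy--Schwarz; both routes land on the same bound, and your remarks about the row index $i$ and keeping $\lambda$ symbolic are, if anything, slightly more careful than the paper's own write-up.
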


\begin{proof}
First, we expand the formula
\begin{equation}
\begin{aligned}
    \|\Sigma_{rv(z)}'^{tuning} - \Sigma_{rv(z)}'\|&=\|\omega_{Q_{tuning},K} W_{L}^{T}\Sigma_{rv(V)}W_{L} - \omega_{Q,K} W_{L}^{T}\Sigma_{rv(V)}W_{L}\| \\
&= |\omega_{Q_{tuning},K} - \omega_{Q,K}|\|W_{L}^{T}\Sigma_{rv(V)}W_{L}\|.
\end{aligned}
\end{equation}
In the following, we consider the first term and have 
\begin{equation}
\begin{aligned}
   |\omega_{Q_{tuning},K} - \omega_{Q,K}| & = |\|M_{Q_{tuning},K}^{i}\|^2_2 - \|M_{Q,K}^{i}\|^2_2| \\
   & = |(M_{Q_{tuning},K}^{i} - M_{Q,K}^{i})^T (M_{Q_{tuning},K}^{i} + M_{Q,K}^{i})| \\
   & \leq \|M_{Q_{tuning},K}^{i} - M_{Q,K}^{i}\|_2 \|M_{Q_{tuning},K}^{i} + M_{Q,K}^{i}\|_2 \\
    & \leq 2\|M_{Q_{tuning},K}^{i} - M_{Q,K}^{i}\|_2 \\
    & = 2\|\mathrm{softmax}(Q^{i}_{tuning}K^{T}/\sqrt{d}) - \mathrm{softmax}(Q^{i}K^{T}/\sqrt{d})\|_2.
\end{aligned}
\end{equation}
By Lemma \ref{lamma}, $\mathrm{softmax}$ is a Lipschitz function under $2-$norm, thus we have
\begin{equation}
\begin{aligned}
   |\omega_{Q_{tuning},K} - \omega_{Q,K}| 
    & \leq \frac{2}{\sqrt{d}} \|\frac{Q^{i}_{tuning}K^T}{\sqrt{d}} - \frac{Q^{i}K^T}{\sqrt{d}}\|_2 \\
     & \leq \frac{2}{{d}} \|Q^{i}_{tuning} - Q^{i}\|_2\|K\|_2 \\
     & = \frac{2}{{d}} \|\hat{z}^{i}W_q^{tuning} - \hat{z}^{i}W_q\|_2\|K\|_2 \\
     & = \frac{2}{{d}}\|W_q^{tuning} - W_q\|_2\|\hat{z}^{i}\|_2\|K\|_2. \\
\end{aligned}
\end{equation}
In summary, we can derive
\begin{equation}
\begin{aligned}
    \|\Sigma_{rv(z)}'^{tuning} - \Sigma_{rv(z)}'\|& \leq \frac{2}{{d}}\|W_q^{tuning} - W_q\|_2\|\hat{z}^{i}\|_2\|K\|_2\|W_{L}^{T}\Sigma_{rv(V)}W_{L}\|.
\end{aligned}
\end{equation}
\end{proof}

\begin{remark}
    Because $W_q^{tuning}$ is optimized iteratively from $W_q$, which is well initialized from pre-trained LDM to have a small gradient value under the diffusion loss, and
    \begin{equation}
        \|W_q^{tuning} - W_q\|_2^2 < \mathrm{Trace}((W_q^{tuning} - W_q)(W_q^{tuning} - W_q)^T)=\|W_q^{tuning} - W_q\|_F^2,
    \end{equation}
    it is intuitive that under a small learning rate, this term will have a small value.
\end{remark}

\subsection{Covariate shift for appending Temporal Attention Module}

For a batch of temporal data $z\in \mathbb{R}^{(b l) \times n\times d}$, Temporal Attention (\textit{TA}) Module has the following form
 \begin{equation}
        \mathrm{Transformer}(z) = z+Linear(\mathrm{Attention}(norm(z), norm(z)).
    \end{equation}
Considering a temporal feature $z\in \mathbb{R}^{n\times d}$, following Proposition \ref{prop1} the impact on mean value $\mu_{rv(z)}' - \mu_{rv(z)}=W_{L}^{T}W_{v}^{T}\mu_{rv(\hat{c})}+b_{L}$, and  the impact on covariance $\Sigma_{rv(z)}' - \Sigma_{rv(z)} =\omega_{Q,K} W_{L}^{T}\Sigma_{rv(V)}W_{L}$. 

\subsection{Covariate shift for appending Shift-restricted Temporal Attention Module}

For a batch of temporal data $z\in \mathbb{R}^{(b l) \times n\times d}$, STAM has the following form
 \begin{equation}
        \mathrm{Transformer}(z) = z+\bar{Linear}(\mathrm{\bar{Attention}}(IC(z), IC(z)).
    \end{equation}
Considering a sample $z\in \mathbb{R}^{n\times d}$, the impact on mean value $\mu_{rv(z)}' - \mu_{rv(z)}=W_{L}^{T}W_{v}^{T}\mu_{rv(\hat{c})}+b_{L}$, and  the impact on covariance $\Sigma_{rv(z)}' - \Sigma_{rv(z)} =\omega_{Q,K} W_{L}^{T}\Sigma_{rv(V)}W_{L}$. However, due to $\mu_{rv(\hat{c})} = 0$,  we have $\mu_{rv(z)}' - \mu_{rv(z)}=b_{L}$. The covariance shift is illustrated by the following proposition.

\begin{proposition}
    Along with notations in Proposition \ref{prop1}, given $\Sigma_{rv(z)}'=\Sigma_{rv(z)}+\omega_{Q,K} W_{L}^{T}\Sigma_{rv(V)}W_{L}$, we have
\end{proposition}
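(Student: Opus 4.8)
The plan is to bound $\|\Sigma_{rv(z)}' - \Sigma_{rv(z)}\|$ under the spectral norm by first applying the triangle/submultiplicativity inequalities to the extra term $\omega_{Q,K} W_{L}^{T}\Sigma_{rv(V)}W_{L}$, then controlling each factor separately using the structural properties of STAM. Concretely, from Proposition~\ref{prop1} we know $\Sigma_{rv(z)}' - \Sigma_{rv(z)} = \omega_{Q,K} \bar{W}_{L}^{T}\Sigma_{rv(V)}\bar{W}_{L}$ where $\Sigma_{rv(V)} = \bar{W}_{v}^{T}\Sigma_{rv(\hat{z})}\bar{W}_{v}$ and $\hat{z} = IC(z)$. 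So the increment is $\omega_{Q,K}\,\bar{W}_{L}^{T}\bar{W}_{v}^{T}\Sigma_{rv(\hat{z})}\bar{W}_{v}\bar{W}_{L}$, and by submultiplicativity of the spectral norm $\|\Sigma_{rv(z)}' - \Sigma_{rv(z)}\| \le \omega_{Q,K}\,\sigma_{\max}(\bar{W}_{L})^2\,\sigma_{\max}(\bar{W}_{v})^2\,\|\Sigma_{rv(\hat{z})}\|$.

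The key observations that make the bound clean are: (a) spectral normalization forces $\sigma_{\max}(\bar{W}_{L}) = \sigma_{\max}(\bar{W}_{v}) = 1$ by construction (since $\bar W = W/\sigma_{\max}(W)$), so those two factors drop out; (b) the $IC$ layer preserves the temporal variance, i.e.\ $\Sigma_{rv(\hat{z})} = \Sigma_{rv(z)}$ (it only subtracts a per-instance mean, which shifts the mean but not the covariance), so $\|\Sigma_{rv(\hat z)}\| = \|\Sigma_{rv(z)}\|$; and (c) $\omega_{Q,K} = \|M^{i}_{Q,K}\|_2^2 \le 1$ because $M^{i}_{Q,K}$ is a row of a softmax matrix — a probability vector in the simplex — whose $\ell_2$ norm is at most its $\ell_1$ norm, which equals $1$. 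Chaining these gives $\|\Sigma_{rv(z)}' - \Sigma_{rv(z)}\| \le \|\Sigma_{rv(z)}\|$, and hence by the triangle inequality $\|\Sigma_{rv(z)}'\| \le \|\Sigma_{rv(z)}\| + \|\Sigma_{rv(z)}' - \Sigma_{rv(z)}\| \le 2\|\Sigma_{rv(z)}\|$, which is exactly the claim stated after Eq.~\eqref{eq:STAM}.

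I would carry out the steps in this order: first invoke Proposition~\ref{prop1} to write down the exact covariance increment for the STAM form; second, substitute the normalized weights $\bar W_L, \bar W_v$ and expand $\Sigma_{rv(V)}$; third, apply submultiplicativity to peel off the weight factors; fourth, argue $\sigma_{\max}(\bar W) = 1$ from the definition of $SN$; fifth, note $IC$ leaves the covariance invariant; sixth, bound $\omega_{Q,K} \le 1$ via the simplex argument on softmax rows; and finally assemble the triangle inequality to get the factor-$2$ bound. The main obstacle — really the only subtle point — is step five: one must verify carefully that subtracting $\frac1n\sum_i z^i$ from each row genuinely leaves the row-covariance $\Sigma_{rv(z)}$ unchanged under the i.i.d.\ Gaussian model, rather than, say, reducing it by a $1/n$ factor as a sample-centering operation would. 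Since $IC$ subtracts the \emph{empirical} temporal mean, the centered variable is $rv(z)_k - \frac1n\sum_{i=1}^n rv(z)_i$, whose covariance is actually $(1-\tfrac1n)\Sigma_{rv(z)}$; so strictly one gets $\|\Sigma_{rv(\hat z)}\| = (1-\tfrac1n)\|\Sigma_{rv(z)}\| \le \|\Sigma_{rv(z)}\|$, which still suffices for the inequality. I would state this honestly and note that the bound $\le 2\|\Sigma_{rv(z)}\|$ holds a fortiori, with the $IC$ layer if anything slightly \emph{shrinking} the variance rather than amplifying it — consistent with the paper's design rationale for dropping the rescaling operator.
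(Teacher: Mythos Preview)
Your approach is essentially identical to the paper's: bound $\omega_{Q,K}\le 1$ via the softmax/simplex argument, then peel off $W_L$ and $W_v$ by submultiplicativity of the spectral norm to reach $\|\Sigma_{rv(z)}'-\Sigma_{rv(z)}\|\le \|W_L\|^2\|W_v\|^2\|\Sigma_{rv(\hat z)}\|$, and finally specialize to STAM where $\sigma_{\max}(\bar W_L)=\sigma_{\max}(\bar W_v)=1$ and $\Sigma_{rv(\hat z)}=\Sigma_{rv(z)}$. Your extra observation that $IC$ in fact yields $\Sigma_{rv(\hat z)}=(1-\tfrac1n)\Sigma_{rv(z)}$ rather than exact equality is a nice sharpening the paper glosses over, and as you note it only strengthens the bound.
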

\begin{equation}
\|\Sigma_{rv(z)}' - \Sigma_{rv(z)}\| \leq \| W_{L}\|^2 \| W_{v} \|^2 \|\Sigma_{rv(\hat{z})}\|.
\end{equation}
\begin{proof}
    Since $\sum_{j=1} M_{Q,K}^{i,j}=1$, and $M_{Q,K}^{i,j}\geq 0$, we can derive $0\leq \omega_{Q,K} \leq 1$. Thus
    \begin{equation}
        \begin{aligned}
            \|\Sigma_{rv(z)}' - \Sigma_{rv(z)}\| &= \|\omega_{Q,K} W_{L}^{T}\Sigma_{rv(V)}W_{L}\| \\
            &\leq \|W_{L}^{T}\Sigma_{rv(V)}W_{L}\| \\
             &\leq \|W_{L}\|^2\|\Sigma_{rv(V)}\| \\
             &\leq \|W_{L}\|^2 \|W_{v}\|^2\|\Sigma_{rv(\hat{z})}\|.
        \end{aligned}
    \end{equation}
\end{proof}
In STAM, since $\Sigma_{rv(\hat{z})} = \Sigma_{rv(z)}$, $\|W_{L}\| = 1$ and $\|W_{v}\| = 1$, thus the proposition shows that   
$\|\Sigma_{rv(z)}' - \Sigma_{rv(z)}\|\leq \|\Sigma_{rv(z)}\|$. By experiments, the most variances are small values around $1e-2$, hence STAM indeed alleviates the covariate shift compared with \textit{TA} Module.

\section{More Results}
\begin{figure}[h]
    \centering
    \includegraphics[width=1\linewidth]{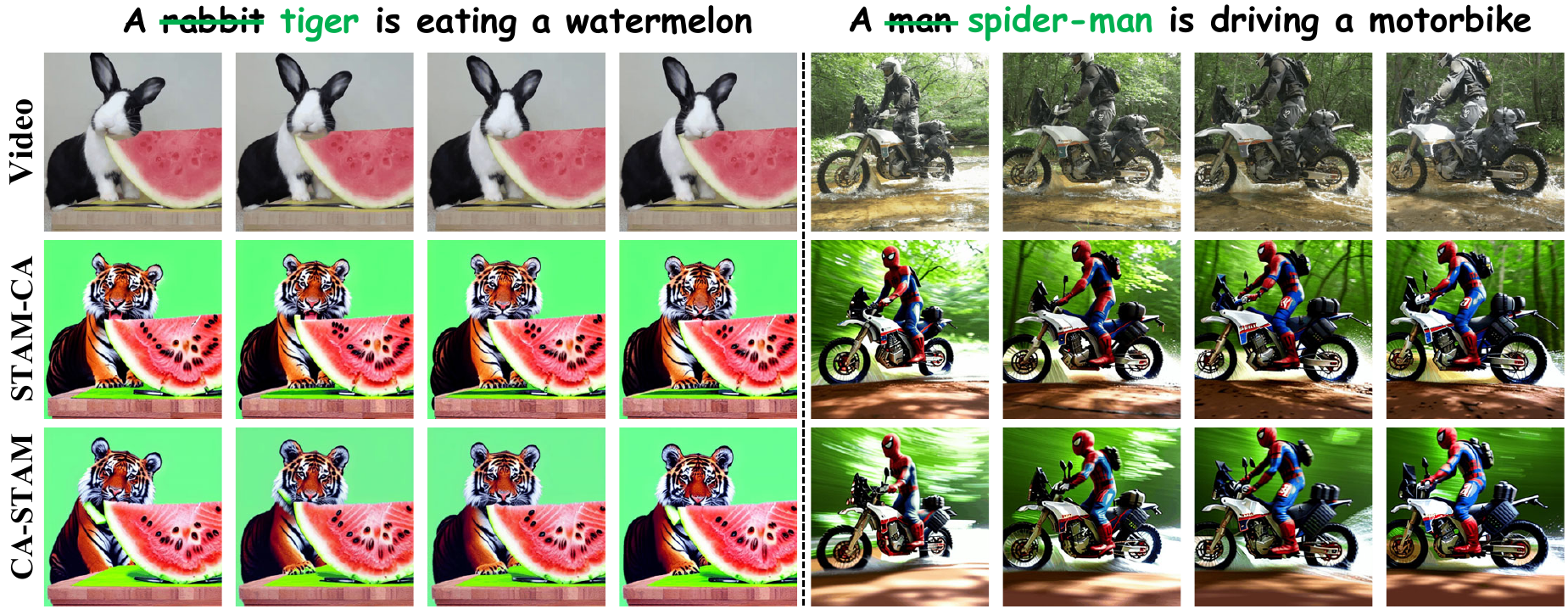}
    \caption{Inserting STAM before (STAM-\textit{CA}) or after (\textit{CA}-STAM) \textit{CA} Module. The examples show that different orders have subtle visual effects on the results.
    }
    \vspace{-2mm}
    \label{fig:order}
\end{figure}
\subsection{Position for inserting Temporal Attention Module} 
We note that \textit{Tune-A-Video} appends \textit{TA} Module after CA Module, where the module sequence is \textit{SCA} Module---\textit{CA} Module---\textit{TA} Module.  Our EI$^2$ inserts STAM before CA Module, where the module sequence is FFAM---STAM---\textit{CA} Module. This sequence is more suitable for the theoretical analysis of STAM.  In practice, the sequence FFAM---\textit{CA} Module---STAM for inflating LDM also performs well in experiments, and we do not observe an obvious visual advantage from reordering the module sequence.  Figure \ref{fig:order} depicts a few examples. 

\subsection{More qualitative results} 
For comprehensive analysis and evaluation, we have supplied a video in the supplementary materials,  which contains the \textbf{\textit{Qualitative Comparison}}, \textbf{\textit{Ablation Study}} and \textbf{\textit{Combination of EI$^2$ and P2P}}~\cite{hertz2022prompt}. 
It can be observed that STAM and FFAM facilitate EI$^2$ to surpass previous state-of-the-art methods in terms of textual alignment and temporal consistency. Incorporating P2P~\cite{hertz2022prompt} with EI$^2$ can further improve the stability and structure preservation of results. 


\end{document}